\documentclass{article} 
\usepackage{arxiv}
\usepackage{natbib}

\usepackage{graphicx}
\usepackage{booktabs}
\usepackage[accsupp]{axessibility}  
\usepackage[pagebackref,breaklinks]{hyperref}
\usepackage{orcidlink}
\usepackage{colortbl}
\definecolor{light-light-gray}{gray}{0.92}
\usepackage{wrapfig}
\usepackage{algorithm}
\usepackage{algpseudocode}
\usepackage[normalem]{ulem}
\usepackage{array}

\usepackage{amsmath,amsfonts,bm}

\def\eqref#1{equation~\ref{#1}}

\def\1{\bm{1}}

\def\vW{{\bm{W}}}

\def\vzero{{\bm{0}}}

\def\vv{{\bm{v}}}

\def\vx{{\bm{x}}}

\def\vz{{\bm{z}}}

\def\mH{{\bm{H}}}
\def\mI{{\bm{I}}}

\DeclareMathAlphabet{\mathsfit}{\encodingdefault}{\sfdefault}{m}{sl}
\SetMathAlphabet{\mathsfit}{bold}{\encodingdefault}{\sfdefault}{bx}{n}

\def\gC{{\mathcal{C}}}
\def\gD{{\mathcal{D}}}

\def\gL{{\mathcal{L}}}

\def\gX{{\mathcal{X}}}

\usepackage{graphicx}
\usepackage{hyperref}
\usepackage{url}
\usepackage{booktabs}
\usepackage{multirow}
\usepackage{booktabs}
\usepackage{makecell}
\usepackage[framemethod=default]{mdframed}
\usepackage[strict]{changepage}
\usepackage{enumitem}

\usepackage{amsthm}

\theoremstyle{plain}
\newtheorem{theorem}{Theorem}[section]
\newtheorem{proposition}[theorem]{Proposition}

\theoremstyle{definition}

\theoremstyle{remark}
\newtheorem{remark}[theorem]{Remark}

\newcolumntype{R}[1]{>{\raggedleft\arraybackslash}p{#1}}

\newmdenv[
  linewidth=0pt,
  linecolor=black,
  innerleftmargin=5pt,
  innerrightmargin=5pt,
  skipabove=5pt,
  skipbelow=5pt
]{promptbox}

\newmdenv[
  linewidth=1pt,
  linecolor=black,
  topline=true,
  bottomline=true,
  leftline=true,
  rightline=true,
  innerleftmargin=10pt,
  innerrightmargin=10pt,
  innertopmargin=10pt,
  innerbottommargin=10pt,
  skipabove=1pt,
  skipbelow=1pt
]{examplebox}

\title{Uncertainty-Aware Distribution-to-Distribution Flow Matching for Scientific Imaging}

\author{ {\hspace{0.1mm}Dongxia Wu}\\
    \texttt{Stanford University}\\
    \texttt{Stanford, CA}\\
	\texttt{dowu@stanford.edu} \\
    \And{\hspace{0.1mm}Yuhui Zhang}\\
    \texttt{Stanford University}\\
    \texttt{Stanford, CA}\\
    \texttt{yuhuiz@stanford.edu}\\
    \And{\hspace{0.1mm}Serena Yeung-Levy}\\
    \texttt{Stanford University}\\
    \texttt{Stanford, CA}\\
    \texttt{syyeung@stanford.edu} \\
    \And{\hspace{0.1mm}Emma Lundberg}\\
    \texttt{Stanford University}\\
    \texttt{Stanford, CA}\\
	\texttt{emmalu@stanford.edu} \\
    \And{\hspace{0.1mm}Emily B. Fox}\\
    \texttt{Stanford University}\\
    \texttt{Stanford, CA}\\
	\texttt{ebfox@stanford.edu} \\
}

\date{}

\newcommand{\modelname}[0]{AVUQ}

\begin{document}

\maketitle

\begin{abstract}
  Distribution-to-distribution generative models support scientific imaging tasks
  ranging from modeling cellular perturbation responses to translating medical
  images across conditions. Trustworthy generation requires \emph{reliability},
  or generalization across labs, devices, and experimental conditions, and
  \emph{accountability}, or detecting out-of-distribution cases where
  predictions may be unreliable. We leverage Stochastic Flow Matching (SFM), a
  marginal-preserving stochastic extension of flow matching for improved
  generalization under distribution shift. SFM augments deterministic flows with
  a diffusion term together with a learned score-based drift correction,
  retaining the learned transport marginals while modeling conditional
  variability. Building on this SFM framework, we introduce Bayesian Stochastic
  Flow Matching (BSFM) as a companion uncertainty quantification mechanism and
  develop \modelname~(Antithetic Variance-reduction Uncertainty Quantification)
  to approximately estimate epistemic and aleatoric uncertainty via
  sample-efficient antithetic sampling with approximate posterior inference. We
  further use \modelname~to yield anomaly scores for unreliable generation
  detection.
  Experiments on cellular imaging (BBBC021, JUMP) and brain fMRI (Theory of
  Mind) across diverse unseen scenarios show that SFM improves generalization
  while \modelname~provides effective uncertainty-based anomaly scores under
  practical sampling budgets.
\end{abstract}

\section{Introduction}

Distribution-to-distribution generative image models have emerged as
powerful tools across scientific domains, ranging from predicting cellular morphology
changes under chemical and genetic perturbations \citep{zhang2025cellflux} and
mapping between resting-state and task-evoked brain activity in neuroimaging
\citep{kwon2025predicting, kan2022fbnetgen} to translating medical images
between control and treatment conditions \citep{li2023zero,arslan2025self}.
By learning transformations between well-defined source and target distributions,
these models capture meaningful scientific relationships, such as how
biological systems respond to interventions, how neural activity patterns shift
across cognitive states, or how pathological conditions manifest in medical
imaging. This makes them particularly valuable for biomedical imaging,
where interpreting distributional transformations is central to the underlying
scientific question.

Despite the potential of these models, deploying them in real-world scientific applications demands
trustworthiness beyond mere generative quality \citep{huang2025trustworthiness, blau2024protecting}.
Trustworthiness in this context
requires two complementary properties: \textbf{reliability}, the ability to generalize across
distribution shifts such as unseen perturbation types, cell lines, assay plates,
laboratories, imaging devices, or protocols; and \textbf{accountability}, the
capacity to detect out-of-distribution (OOD) cases and other unreliable
generations when predictions fall outside the model's reliable operating range.
Without these properties, models
risk producing misleading predictions that could lead to costly experimental
validation of false hypotheses, patient safety concerns, or missed scientific opportunities.
For instance, in drug discovery, a cellular morphology prediction model
that confidently predicts responses to novel compounds without flagging
uncertainty could misdirect expensive screening campaigns.
Fig.~\ref{fig:motivation} (left) illustrates the challenges we aim to address.

\begin{figure*}[t]
    \centering
    \includegraphics[width=0.95\linewidth]{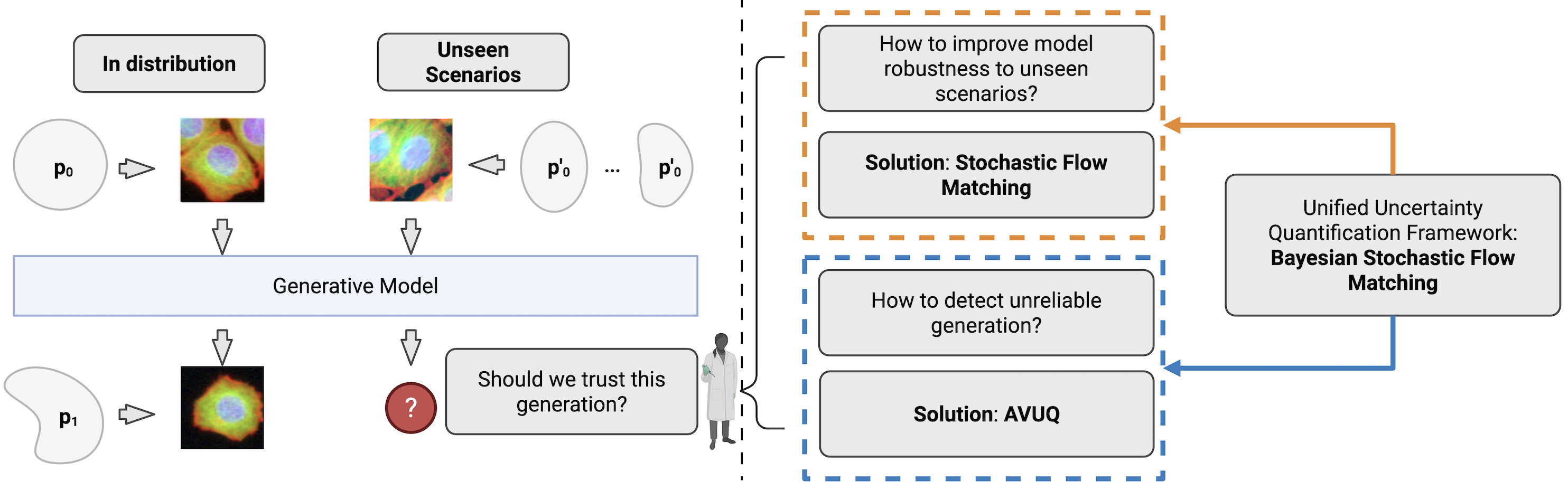}
    \caption{Trustworthy distribution-to-distribution generative modeling. \textbf{Left:}
    In-domain samples are drawn from the source distribution $p_0$, while unseen samples
    may arise from shifted source distributions $\{p_0^\prime\}$. \textbf{Right:} Our
    approach centers on Stochastic Flow Matching (SFM) to improve model
    generalization, with Bayesian Stochastic Flow Matching (BSFM) and
    \modelname~providing approximate aleatoric and epistemic uncertainty estimates
    for unreliable generation detection.}
    \label{fig:motivation}
    \vspace{-0.15in}
\end{figure*}

Flow matching \citep{lipman2022flow,zhang2025cellflux} is a natural backbone for
distribution-to-distribution generation because it learns continuous
transformations via neural ordinary differential equations (ODEs). However,
standard flow matching is deterministic: for a fixed input and condition, it
produces a single generated output. This can limit generalization under
scientific distribution shifts that require the model to represent conditional
variability rather than a single transport path. A naive stochastic extension
that simply injects diffusion noise can corrupt the learned marginals and
degrade generation quality. Thus, a key technical challenge is to add
stochasticity in a way that improves robustness while preserving the learned
distribution-to-distribution mapping.

We address this challenge with \textbf{Stochastic Flow Matching (SFM)}, a
marginal-preserving stochastic extension of deterministic flow matching. SFM
introduces a diffusion term together with a learnable score-based drift
correction, so stochastic sampling can model conditional variability without
changing the target marginals of the learned transport. This makes SFM suitable
for improving generalization in distribution-to-distribution scientific
imaging: it retains the efficiency and regression-based training of flow matching while
improving robustness to unseen scenarios.

Uncertainty quantification (UQ) provides a complementary accountability
mechanism, but scaling UQ to high-dimensional image generators is
computationally challenging. \textbf{Bayesian Stochastic Flow Matching} (BSFM)
provides a promising UQ framework, but exact Bayesian inference is intractable for the
velocity and score networks, and naive nested sampling requires many costly SDE solves
to obtain stable estimates. We therefore develop \textbf{\modelname}~(Antithetic Variance-reduction Uncertainty Quantification) by leveraging antithetic sampling for
reduced-variance, sample-efficient estimation of both aleatoric and epistemic uncertainty terms.
Although \modelname~can be applied with any method of generating approximate posterior samples, in
this paper we use MC-Dropout (MCD) as a scalable posterior approximation.
We show empirically that this MCD instantiation outperforms other approximate
inference methods. The key contribution, however, is the sample-efficient
variance reduction provided by \modelname; without it, nested sampling
would be computationally prohibitive. The resulting approximate estimates of aleatoric and epistemic uncertainty serve as anomaly scores for unreliable
generation detection without architectural changes or additional training.

Through extensive experiments on cellular imaging datasets (BBBC021, JUMP) and
an fMRI dataset (`Theory of Mind', ToM) across diverse unseen scenarios---novel
perturbations, different laboratory settings, novel cell lines, unseen assay plates,
and low-performing ToM subjects---we demonstrate the effectiveness of SFM and
\modelname. Specifically, SFM improves reliability through enhanced
generalization under distribution shifts, while \modelname~provides
sample-efficient uncertainty estimates that support effective unreliable
generation detection.
Together, these components provide a practical recipe for more trustworthy
distribution-to-distribution generative modeling in scientific applications
(Fig.~\ref{fig:motivation}, right). In summary, our contributions are:
\begin{itemize}
    \item Leveraging \textbf{Stochastic Flow Matching (SFM)}, a marginal-preserving stochastic
    extension of flow matching, for improved generalization under distribution shifts.
    \item Proposing \textbf{\modelname}~(Antithetic Variance-reduction Uncertainty Quantification) in the \textbf{Bayesian Stochastic Flow Matching (BSFM)} framework, which provides approximate and sample-efficient estimates of aleatoric and epistemic uncertainty.
    \item Demonstration across scientific imaging benchmarks that SFM improves
    generation quality in unseen scenarios and that \modelname~provides effective
    uncertainty-based signals for unreliable generation detection in OOD scenarios.
\end{itemize}

\section{Related Work}

\paragraph{Distribution-to-distribution Generative Models.}
Distribution-to-distribution generative models learn mappings between two well-defined
data distributions, in contrast to conventional noise-to-data generation. This
paradigm has found applications across diverse scientific domains
\citep{zhang2025cellflux,kwon2025predicting, kan2022fbnetgen,li2023zero,arslan2025self}. These
models are typically built upon flow-based architectures
\citep{lipman2022flow, lipman2024flow}, diffusion bridge models \citep{shi2023diffusion, zhou2023denoising,de2021diffusion}, or variants of generative
adversarial networks \citep{zhu2017unpaired, park2020contrastive, kim2023unpaired}. Flow matching has emerged as
a particularly promising approach due to its efficient training
via regression on velocity fields and straightforward inference through
ordinary differential equation (ODE) integration. Recent extensions include classifier-free guidance for enhanced conditional 
generation \citep{zhang2025cellflux}
and stochastic perturbations to improve in-distribution performance \citep{su2025three}.
While these models achieve impressive image generation
fidelity on in-distribution data, their behavior under distribution shifts
remains unexplored, motivating UQ frameworks that can characterize
unreliable generation under such shifts.

\paragraph{Uncertainty Quantification in Generative Models.}
UQ in generative models has gained growing interest,
especially for diffusion models. For epistemic uncertainty, Bayesian formulations
such as BayesDiff \citep{kou2023bayesdiff} and Jazbec et al. \citep{jazbec2025generative}
adopt last-layer Laplace approximations in a post-hoc manner, while
ensemble-based approaches like DECU \citep{berry2024shedding} capture model variability
through multiple denoisers. For aleatoric uncertainty, diffusion models naturally
support posterior sampling for quantifying data variability \citep{de2025diffusion}. Applications include
inverse imaging methods \citep{xie2022measurement, feng2023score, wu2024principled} and text-to-image
generation \citep{franchi2025towards}, with conformal prediction approaches \citep{teneggi2023trust, ekmekci2025conformalized}
providing distribution-free coverage guarantees. Hyper-Diffusion \citep{chan2024estimating} decomposes both uncertainty
types within a single hypernetwork model but requires computationally
expensive nested sampling. However, existing UQ methods primarily target
\textit{noise-to-data} generation with \textit{diffusion models}, while UQ for \textit{flow-based
models} in \textit{distribution-to-distribution} settings remains unexplored. 

\paragraph{Out-of-Distribution Detection in Generative Models.} 
OOD detection in generative models has been widely studied,
primarily using reconstruction or manifold consistency approaches. Methods include
diffusion inpainting for manifold projection \citep{liu2023unsupervised}, denoising trajectory
analysis \citep{graham2023denoising}, semantic mismatch measurement \citep{gao2023diffguard}, and projection
regret \citep{choi2023projection}. Geometric and statistical cues such as
diffusion-path curvature \citep{heng2024out}, covariance spectra \citep{shoushtari2025eigenscore}, and norm-guided
residuals \citep{zhang2025diffusionad} have also been explored. Applications span
medical imaging \citep{graham2023unsupervised,bercea2024diffusion,linmans2024diffusion}, road-scene analysis \citep{galesso2024diffusion}, and
zero-shot \citep{abdi2025zero} and trend-based \citep{kim2024unsupervised} anomaly detection. However,
these methods operate in noise-to-data or image-to-manifold regimes, defining
OOD globally with respect to a single data distribution.
In contrast, our work addresses distribution-to-distribution settings where the definition of OOD
is conditional on the source distribution, a source-aware regime that has not
been addressed by existing diffusion or flow-matching approaches.

\section{Preliminaries}
\subsection{Distribution-to-Distribution Image Generation}
Let $\gX$ denote the image space. Let $p_0$ be
a source distribution and $p_1$ a target distribution over
$\gX$. The goal of distribution-to-distribution image generation is to
learn a generative model mapping $p_0$ to $p_1$.
Given an image $\vx_0 \sim p_0$,
the model generates $\vx_1 \sim p_1$ that represents
the transformed image in the target distribution.
Optionally, we can incorporate a condition space $\gC$ to enable
conditional generation, where the target distribution becomes $p_1(\cdot|c)$ for
$c \in \gC$. We then learn a
conditional generative model $p(\vx_1 | \vx_0, c)$ that maps
$\vx_0 \sim p_0$ to $\vx_1 \sim p_1(\cdot|c)$. For
instance, in cellular imaging, $p_0$ represents the distribution of unperturbed
cell images, $p_1(\cdot|c)$ represents the distribution of cell images under
drug perturbation $c$, and the model predicts morphological changes induced
by the perturbation.

\subsection{Flow Matching-based Generative Modeling}
Flow matching-based generative models learn invertible mappings between $p_0$ and
$p_1$ via continuous transformations. Given pairs of samples from
these distributions, flow matching learns a time-dependent velocity field using
a neural network $\vv_\theta: \gX \times [0,1] \to \gX$ that
defines the instantaneous direction and magnitude of change at each
point. The transformation process follows the ODE:
\begin{align}
\frac{d\vx_t}{dt} = \vv_\theta(\vx_t, t), \quad \vx_0 \sim p_0, \vx_1 \sim p_1, t \in [0,1].
\end{align}

We follow \citep{liu2022flow} and employ the rectified flow formulation, which
yields a straight-line path:
\begin{align}
    \vx_t = (1-t)\vx_0 + t\vx_1, \quad t \sim \mathcal{U}[0,1].
\end{align}
The linear path has velocity field $\vv(\vx_t,t) = d\vx_t/dt =
\vx_1 - \vx_0$, representing the optimal transport direction at
each point. The neural network $\vv_\theta$ is trained to
match this velocity field by minimizing:
\begin{align}
    \gL(\theta) = \mathbb{E}_{(\vx_0, \vx_1) \sim p_0 \times p_1, t \sim \mathcal{U}(0,1)}\|\vv_\theta(\vx_t, t) - (\vx_1 - \vx_0)\|^2_2.
\end{align}

At inference, given $\vx_0 \sim p_0$, we solve the
ODE from $t=0$ to $t=1$ to obtain the deterministic output $\hat{\vx}_1$:
\begin{align}
    \hat{\vx}_1 = \vx_0 + \int_0^1 \vv_\theta(\vx_t, t) dt.
\end{align}

\subsection{Conditional Flow Matching}
Conditional flow matching (CFM)~\citep{zhang2025cellflux} learns a conditional vector field
$\vv_\theta: \gX \times [0,1] \times \gC \to \gX$ defining
the flow from $p_0$ to a condition-dependent target distribution $p_1(\cdot|c)$ for $c \in \gC$ via the ODE:
\begin{align}
\frac{d\vx_t}{dt} = \vv_\theta(\vx_t, t, c), \quad \vx_0 \sim p_0, \vx_1 \sim p_1(\cdot|c).
\end{align}

We apply classifier-free guidance \citep{ho2022classifier,zheng2023guided} to enhance conditional generation.
During training, we randomly mask condition $c$ with probability $p_c$,
replacing it with a null condition $c_\emptyset$. At inference, conditional
and unconditional predictions combine to form the guided vector field:
\begin{align}
\tilde{\vv}_\theta(\vx_t, t, c) = \alpha\vv_\theta(\vx_t, t, c) + (1-\alpha)\vv_\theta(\vx_t, t, c_\emptyset),
\end{align}
where $\alpha \geq 1$ controls the guidance strength.

Henceforth, we use the conditional notation $p(\vx_1 | \vx_0, c)$ and $p_1(\cdot|c)$
as a unified representation for both conditional and unconditional cases.
The unconditional case can be recovered via $c = c_\emptyset$.

\section{Methodology}
We first develop \textbf{Stochastic Flow Matching (SFM)},
which stochastically maps inputs $\vx_0$ to samples $\vx_1$ conditioned
on $c$ while preserving the learned transport marginals. We then
formulate \textbf{Bayesian Stochastic Flow Matching (BSFM)} and propose \textbf{\modelname}~for
sample-efficient nested estimation of aleatoric and
epistemic uncertainty. Fig.~\ref{fig:method_overview} outlines our methodological components.
\begin{figure*}[t]
    \centering
    \includegraphics[width=0.95\textwidth]{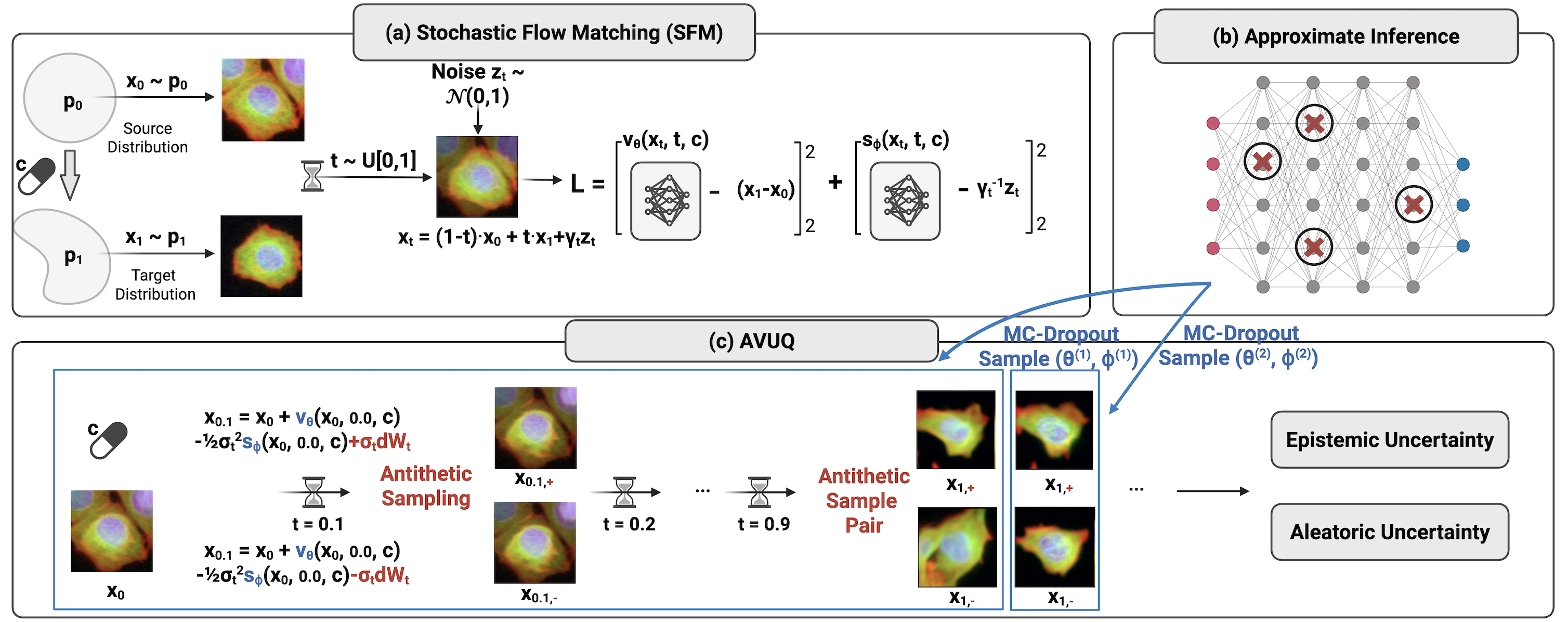}
    \caption{Core components of the Bayesian Stochastic Flow Matching (BSFM) framework:
    (a) Stochastic Flow Matching (SFM) training process.
    (b) MC-Dropout as one approximate inference method that can be used by \modelname.
    (c) \modelname~ with antithetic sampling.
    }
    \label{fig:method_overview}
    \vspace{-0.5em}
\end{figure*}
\subsection{Stochastic Flow Matching}

To quantify the aleatoric uncertainty inherent to the transformation process,
a naive extension adds diffusion noise to the ODE solver of the learned velocity field:
\begin{align}
d\vx_t = \vv_\theta(\vx_t, t, c) dt + \sigma_t d\vW_t,
\end{align}
where $\sigma_t$ is a predefined noise schedule and $\vW_t$ denotes standard Brownian motion.
However, this directly perturbs the marginals $p_t(\vx_t|c)$ at each time $t$,
deviating from the learned transport and degrading generation quality.
Following Song et al.~\citep{song2020score}, we instead derive a marginal-preserving stochastic differential equation (SDE) via the Fokker-Planck equation.
For any ODE $d\vx_t = \vv_\theta(\vx_t,t,c)\,dt$ with marginals $\{p_t(\vx_t|c)\}$,
the corresponding SDE with drift correction,
\begin{align}
d\vx_t = \Big(\vv_\theta(\vx_t, t, c) - \tfrac{1}{2}\sigma_t^2 \nabla_{\vx_t}\log p_t(\vx_t|c)\Big)dt + \sigma_t d\vW_t
\end{align}
shares identical marginals $p_t(\vx_t|c)$ for all $t\in[0,1]$.
The score correction term compensates for diffusion-induced drift,
preserving the learned distribution-to-distribution mapping while introducing controlled stochasticity.

\paragraph{Training and Inference.}
We parameterize the score function as $s_\phi(\vx_t, t, c) \approx \nabla_{\vx_t}\log p_t(\vx_t|c)$
and train it jointly with the velocity field. We perturb the interpolation path
following the standard score matching construction~\citep{albergo2023stochastic}:
\begin{align}
\vx_t = (1-t)\vx_0 + t\vx_1 + \gamma_t \vz_t,\quad \vz_t\sim\mathcal{N}(\vzero, \mI),
\end{align}
where $\gamma_t$ is a smooth noise schedule with $\gamma_0=\gamma_1=0$.
For $\gamma_t>0$, the score admits the analytic form
$\nabla_{\vx_t}\log p_t(\vx_t|c)=-\gamma_t^{-1}\vz_t$.
We set $\gamma_t=a \sin^2(\pi t)$ because it vanishes at the endpoints,
preserving the source and target samples, while smoothly placing the largest
perturbation near the middle of the interpolation path. We minimize the
combined objective $\gL(\theta,\phi)=\gL_{\vv}(\theta)+\lambda\gL_{s}(\phi)$,
\begin{align}
\gL_{\vv}(\theta)=\mathbb{E}\|\vv_\theta(\vx_t,t,c)-(\vx_1-\vx_0)\|_2^2, &\quad \gL_{s}(\phi)=\mathbb{E}\|s_\phi(\vx_t,t,c)+\gamma_t^{-1}\vz_t\|_2^2,
\end{align}
where expectations are taken over $(\vx_0,\vx_1)\sim p_0\times p_1(\cdot|c)$ and $t\sim\mathcal{U}(0,1)$.
At inference, we solve the SDE using the learned velocity and score networks to
generate samples, with $\vv_\theta$ denoting the conditional or guided velocity
field from the preliminaries:
\begin{align}
d\vx_t = \Big(\vv_\theta(\vx_t, t, c) - \tfrac{1}{2}\sigma_t^2 s_\phi(\vx_t, t, c)\Big)dt + \sigma_t d\vW_t.
\label{eq:sde_learned}
\end{align}

\subsection{Bayesian Stochastic Flow Matching}

To quantify epistemic uncertainty,
we adopt a Bayesian treatment of the velocity and score network parameters $(\theta, \phi)$.
Given training data $\gD$, the posterior is given by
\begin{align}
p(\theta, \phi | \gD) \propto p(\gD | \theta, \phi) p(\theta, \phi),
\label{eq:bayes_rule}
\end{align}
where $p(\gD | \theta, \phi)$ denotes the likelihood and $p(\theta, \phi)$ the prior. The predictive distribution marginalizes over parameter uncertainty:
\begin{align}
p(\vx_1 | \vx_0, c, \gD) = \iint p(\vx_1 | \vx_0, c, \theta, \phi) p(\theta, \phi | \gD) d\theta d\phi,
\end{align}
where $p(\vx_1 | \vx_0, c, \theta, \phi)$ is the conditional distribution induced by the SDE
in \eqref{eq:sde_learned}.

\subsection{Uncertainty Decomposition and Estimation}

Total predictive uncertainty
arises from two sources:
aleatoric uncertainty, reflecting inherent stochasticity in the transformation captured by $p(\vx_1 | \vx_0, c, \theta, \phi)$,
and epistemic uncertainty, arising from limited training data and captured by the model posterior $p(\theta, \phi | \gD)$.

We quantify uncertainty via variance and decompose total predictive variance
using the law of total variance:
\begin{align}
\text{Var}(\vx_1 | \vx_0, c, \gD) =
&\underbrace{\mathbb{E}_{p(\theta, \phi | \gD)}
[\text{Var}(\vx_1 | \vx_0, c, \theta, \phi)]}_{\text{aleatoric uncertainty}}
+ \underbrace{\text{Var}_{p(\theta, \phi | \gD)}
[\mathbb{E}(\vx_1 | \vx_0, c, \theta, \phi)]}_{\text{epistemic uncertainty}},
\label{eq:total_variance}
\end{align}
where the first term represents aleatoric uncertainty and the second represents
epistemic uncertainty.

We employ nested sampling to estimate both uncertainty terms: for each of $M$ posterior samples
$(\theta^{(m)}, \phi^{(m)}) \sim p(\theta, \phi|\gD)$, we solve the SDE $K$ times
to generate $\vx_1^{(k,m)} \sim p(\cdot | \vx_0, c, \theta^{(m)}, \phi^{(m)})$ and estimate
\begin{align}
\hat{\mu}^{(m)} = \frac{1}{K}\sum_{k=1}^K \vx_1^{(k,m)} \approx \mathbb{E}(\vx_1 | \vx_0, c, \theta^{(m)}, \phi^{(m)}),
\end{align}
and
\begin{align}
\hat{\Sigma}^{(m)}
= \frac{1}{K-1}\sum_{k=1}^K
(\vx_1^{(k,m)} - \hat{\mu}^{(m)})(\vx_1^{(k,m)} - \hat{\mu}^{(m)})^T
\approx \text{Var}(\vx_1 | \vx_0, c, \theta^{(m)}, \phi^{(m)}).
\end{align}

For our estimate of aleatoric uncertainty, we compute
\begin{align}
    \hat{A} = \frac{1}{M}\sum_{m=1}^{M}\hat{\Sigma}^{(m)}
\end{align}

For epistemic uncertainty, we note that by the law of total variance (conditioning on $\vx_0, c$),
\begin{align}
\mathrm{Var}_{p(\theta,\phi \mid \gD)}[\hat{\mu}(\theta,\phi)]
= \underbrace{\mathrm{Var}_{p(\theta,\phi \mid \gD)}\!\left[\mathbb{E}\!\big(\hat{\mu}\mid \theta,\phi\big) \right]}_{\text{epistemic uncertainty}}
+
\underbrace{\mathbb{E}_{p(\theta,\phi \mid \gD)}\!\left[\mathrm{Var}\!\big(\hat{\mu}\mid\theta,\phi\big)\right]}_{\text{residual Monte Carlo noise}} .
\label{eq:MCnoise}
\end{align}
The first term equals the true epistemic uncertainty because
$\mathbb{E}\!\big(\hat{\mu}\mid \theta,\phi\big) = \mathbb{E}(\vx_1 | \vx_0, c, \theta, \phi)$.
Additionally, note that
$\mathrm{Var}\!\big(\hat{\mu}\mid\theta,\phi\big) =
\frac{1}{K}\mathrm{Var}(\vx_1 | \vx_0, c, \theta, \phi)$. That is, the
residual Monte Carlo (MC) noise is aleatoric uncertainty contamination, scaled
by the sample size $K$, that appears in the naive estimator for epistemic uncertainty:
\begin{align}
    \hat{E} = \frac{1}{M-1}\sum_{m=1}^M (\hat{\mu}^{(m)} - \bar{\mu})(\hat{\mu}^{(m)} - \bar{\mu})^T.
\end{align}
Ideally, we would correct for this MC noise by subtracting $\hat{A}/K$. However, nested sampling incurs $\mathcal{O}(M \times K)$ SDE solves,
which is computationally prohibitive for high-dimensional images, especially for large $M$ and $K$, the regime in which our estimates are lower variance.
To handle this, we introduce a sample-efficient alternative via \modelname.

\subsection{Scalable and Sample-Efficient Uncertainty Estimation}
This section addresses two computational bottlenecks: sampling from an intractable neural-network posterior and estimating SDE moments with few solver calls.
\paragraph{\modelname}
Generating nested SDE samples $\vx_1^{(i,m)} \sim p(\cdot | \vx_0, c, \theta^{(m)}, \phi^{(m)})$
for $i=1,\ldots,K$ and $m=1,\ldots,M$ is computationally expensive.
To alleviate this cost, we employ antithetic sampling \citep{hammersley1956new} to reduce
MC variance under limited sampling budgets. Specifically, for each noise
realization $\epsilon$ used to drive the SDE, we additionally simulate the trajectory induced by
its negation $-\epsilon$. This construction introduces negative correlation between paired samples,
thereby reducing conditional-mean estimator variance without bias under the usual symmetry of the driving noise.

In particular, for $J$ antithetic pairs ($K=2J$ total samples), $\big(\vx_{1,+}^{(j,m)},\vx_{1,-}^{(j,m)}\big)$, we form
\begin{align}
\mathbf{y}^{(j,m)} = \frac{1}{2}\big(\vx_{1,+}^{(j,m)} + \vx_{1,-}^{(j,m)}\big), \quad
\hat{\mu}^{(m)}_{anti} = \frac{1}{J} \sum_{j=1}^J \mathbf{y}^{(j,m)}.
\end{align}
We use $\hat{\mu}_{anti}^{(m)}$ as the conditional-mean estimate
in the epistemic estimator. For aleatoric uncertainty, we compute the
within-posterior-sample marginal variance from the raw antithetic trajectories
$\{\vx_{1,+}^{(j,m)},\vx_{1,-}^{(j,m)}\}_{j=1}^J$, not from the pair means.
Since paired trajectories are correlated, this aleatoric trace is used as a
practical scalar ranking score rather than an unbiased independent-trajectory
variance estimator. Importantly, antithetic sampling reduces the residual MC error
in \eqref{eq:MCnoise}, enabling more reliable epistemic uncertainty estimation under
tight computational budgets. See Fig.~\ref{fig:method_overview}.

\paragraph{Estimating the Model Posterior $p(\theta, \phi \mid \gD)$.}
\modelname~relies on posterior samples $\theta^{(m)}, \phi^{(m)}$. Exact Bayesian inference for deep neural networks is typically intractable.
We therefore adopt MC-Dropout, a scalable approach
to approximate the posterior \citep{gal2016dropout}. As shown in
Fig.~\ref{fig:method_overview}, we insert dropout layers in the velocity
and score networks. At inference, we keep dropout active and perform multiple
stochastic forward passes, yielding samples $\{(\theta^{(m)},\phi^{(m)})\}_{m=1}^M$
from the approximate posterior. This provides a simple and scalable method
in high-dimensional parameter spaces. We note that \modelname~works with
any approximate posterior sampling method. See Sec.~\ref{sec:results} for empirical comparisons.


\section{Results}
We assess SFM for generalization and BSFM with \modelname~for unreliable
generation detection on cellular imaging and fMRI tasks, comparing against a
range of baseline alternatives.

\subsection{Datasets}
We evaluate on two cell-imaging benchmarks: \textbf{BBBC021} (chemical perturbations)
\citep{caie2010high} and \textbf{JUMP} (chemical and genetic perturbations)
\citep{chandrasekaran2023jump}. To probe robustness and unreliable generation
detection, we define four scenarios spanning mild to severe distribution shifts:
\textbf{Unseen Plates} and \textbf{Unseen Cell Lines} in JUMP, synthetic
\textbf{Intensity Shift} in BBBC021 to mimic imaging-condition variation across
laboratory settings, and a combined \textbf{Unseen Pert.} setting with unseen
perturbations and intensity shifts. We further evaluate \textbf{Intensity Shift}
scenarios in JUMP for unreliable generation detection, spanning a full range of
distribution shifts from mild to severe.

To further assess generalization performance, we additionally consider the \textbf{fMRI} Theory-of-Mind (ToM) dataset \citep{richardson2018development},
where the objective is to transform resting-state scans to task-activated
states. We train on high-performing ToM subjects and evaluate generalization to
low-performing (LToM) subjects.
Table~\ref{tab:dataset_summary} summarizes datasets and scenarios. More details are provided in Appendix \ref{app:dataset}.

\subsection{Experimental Setup}
\paragraph{Baselines.}
For generalization, we compare \textbf{SFM} against: \textbf{CellFlux}~\citep{zhang2025cellflux}, a
state-of-the-art deterministic flow matching model for distribution-to-distribution cell image
generation; \textbf{BBDM}, \textbf{GOUB}, and \textbf{UniDB}, three diffusion-bridge baselines~\citep{de2021diffusion,shi2023diffusion,zhou2023denoising};
\textbf{UNSB}~\citep{kim2023unpaired}, which employs a multi-step GAN to learn a
Schrödinger bridge between source and target distributions; and
\textbf{SDEdit}~\citep{meng2021sdedit}, which performs partial noising of the source sample
followed by denoising to generate the target sample.
For unreliable generation detection, we establish a benchmark by comparing different UQ approaches
within the distribution-to-distribution flow matching
framework. We compare \textbf{\modelname} to i.i.d. nested sampling of BSFM with \textbf{MC-Dropout
(MCD)}~\citep{gal2016dropout}, \textbf{Laplace Approximation
(LA)}~\citep{kou2023bayesdiff}, and \textbf{Stochastic Weight
Averaging-Gaussian (SWAG)}~\citep{maddox2019simple}. 
We also include two baselines that do not
require nested sampling to provide lower-cost points in the cost-performance
tradeoff: a \textbf{MAP} baseline that approximates aleatoric uncertainty using
only the MAP estimate (see Appendix~\ref{app:proof_aleatoric_map}) and requires
only $\mathcal{O}(K)$ SDE solves, and a \textbf{MCD-DFM} baseline that applies
MC-Dropout to a Bayesian deterministic flow matching model without the SFM
stochastic extension and requires only $\mathcal{O}(M)$ posterior samples to
approximate epistemic uncertainty.

\paragraph{Training and evaluation details.}
We evaluate generation quality using Fréchet Inception Distance (FID) and
Kernel Inception Distance (KID) to measure distribution similarity.
For unreliable generation detection, we first take all scenarios unseen during
training and filter them to account for the fact that SFM can generalize well to
some unseen scenarios, as demonstrated in Table~\ref{tab:ood_generalization}.
We filter samples using prediction error measured in the feature space of a
mode-of-action pretrained classifier (BBBC021) \citep{zhang2025cellflux} or in
Structural Similarity Index Measure (SSIM) space (JUMP), and flag only
high-error cases as unreliable generations. Further details are in
Appendix~\ref{app:implementation}. Appendix~\ref{app:ood_sensitivity} reports
sensitivity analyses showing that the main unreliable generation detection
trends are robust to smaller sampling budgets and alternative scenario-filtering threshold choices. We also include OOD provenance detection, where
we do not filter unseen scenarios at all.

For our scalar anomaly scores, $f(\vx_0,c)$, we compute
$-\mathrm{tr}(\hat{E})$ and $-\mathrm{tr}(\hat{A})$ for epistemic and
aleatoric uncertainty, respectively. These scores are traces, so we never form
or store full pixel-level covariance matrices. For MAP, we use the negative
mean pixel-wise variance as the anomaly score. We then apply a binary decision
rule:
\begin{align}
h(\vx_0, c) = \begin{cases}
\text{OOD}, &  \text{if } f(\vx_0, c) > \tau \\
\text{ID}, & \text{if } f(\vx_0, c) \leq \tau
\end{cases}
\end{align}
where $\tau$ is a threshold parameter. The sign convention reflects the
failure mode we observe in these generative models: under distribution shift,
the model can collapse toward overconfident predictions, so uncertainty may
decrease rather than increase on OOD inputs
\citep{kirichenko2020normalizing, zhang2021understanding}. In this regime,
unusually low epistemic uncertainty is itself an anomaly signal, motivating
$-\mathrm{tr}(\hat{E})$ rather than $\mathrm{tr}(\hat{E})$. We apply the same
convention to aleatoric uncertainty because it also decreases under the shifts
considered here and is particularly sensitive to mild deviations. This
direction should be revalidated in settings where OOD inputs instead induce
genuine posterior disagreement rather than collapse.

\begin{table*}[t]
\centering
\small
\caption{Generalization performance. We report FID and KID scores across unseen scenarios on BBBC021, JUMP, and fMRI datasets. 
Best results are in \textbf{bold}, second best are \underline{underlined}.}
\resizebox{0.75\textwidth}{!}{
\begin{tabular}{lcccccccccc}
\toprule
& \multicolumn{4}{c}{\textbf{BBBC021}} & \multicolumn{4}{c}{\textbf{JUMP}} & \multicolumn{2}{c}{\textbf{fMRI}}\\
\cmidrule(lr){2-5} \cmidrule(lr){6-9} \cmidrule(lr){10-11}
\textbf{Method} & \multicolumn{2}{c}{Unseen Pert.} & \multicolumn{2}{c}{Intensity Shift} & \multicolumn{2}{c}{Unseen Cell Lines} & \multicolumn{2}{c}{Unseen Plates} & \multicolumn{2}{c}{Low ToM}\\
\cmidrule(lr){2-3} \cmidrule(lr){4-5} \cmidrule(lr){6-7} \cmidrule(lr){8-9} \cmidrule(lr){10-11}
& FID$\downarrow$ & KID$\downarrow$ & FID$\downarrow$ & KID$\downarrow$ & FID$\downarrow$ & KID$\downarrow$ & FID$\downarrow$ & KID$\downarrow$ & FID$\downarrow$ & KID$\downarrow$\\
\midrule
BBDM & 107.69 & 11.88 & 86.34 & 9.25 & 126.41 & 13.25 & 107.24 & 10.68 & 45.31 & 3.55 \\
GOUB & 51.89 & 4.71 & 36.94 & 3.46 & 62.47 & 6.62 & 57.66 & 6.14 & 33.49 & 2.63 \\
UniDB & 50.16 & 4.33 & 36.78 & 3.06 & 65.1 & 6.9 & 60.33 & 6.34 & 30.23 & 1.85 \\
UNSB & 88.31 & 6.27 & 56.95 & 4.7 & 57.24 & 4.19 & 45.5 & 4.04 & 75.54 & 8.89 \\
SDEdit & \underline{37.18} & 3.48 & \underline{29.57} & 2.72 & 33.96 & 3.11 & \textbf{13.21} & \textbf{0.9} & 56.87 & 6.15 \\
CellFlux & 103.73 & 12.76 & 62.01 & 5.84 & \underline{39.06} & 3.4 & \underline{17.81} & \underline{1.04} & \underline{34.86} & \underline{2.63}\\
\midrule
SFM & \textbf{33.29} & \textbf{2.02} & \textbf{28.14} & \textbf{1.87} & \textbf{25.1} & \textbf{1.75} & 18.02 & 1.23 & \textbf{25.55} & \textbf{1.47}\\
\bottomrule
\end{tabular}
}
\label{tab:ood_generalization}
\vspace{-0.5em}
\end{table*}

\begin{table*}[t]
\centering
\small
\caption{Unreliable generation detection performance. We report AUROC and AUPR scores
as mean $\pm$ standard error over 3 random seeds across 4 unreliable-generation scenarios on BBBC021 and JUMP datasets. We group methods into non-nested baselines,
nested i.i.d. baselines, and nested-antithetic \modelname.}
\resizebox{\textwidth}{!}{
\begin{tabular}{lcccccccc}
\toprule
& \multicolumn{4}{c}{\textbf{BBBC021}} & \multicolumn{4}{c}{\textbf{JUMP}} \\
\cmidrule(lr){2-5} \cmidrule(lr){6-9}
\textbf{Method} & \multicolumn{2}{c}{Unseen Pert.} & \multicolumn{2}{c}{Intensity Shift} & \multicolumn{2}{c}{Unseen Cell Lines} & \multicolumn{2}{c}{Intensity Shift}\\
\cmidrule(lr){2-3} \cmidrule(lr){4-5} \cmidrule(lr){6-7} \cmidrule(lr){8-9} 
& AUROC$\uparrow$ & AUPR$\uparrow$ & AUROC$\uparrow$ & AUPR$\uparrow$ & AUROC$\uparrow$ & AUPR$\uparrow$ & AUROC$\uparrow$ & AUPR$\uparrow$\\
\midrule
\textbf{non-nested} & & & & & & & & \\
MAP Aleatoric & 0.691$\pm$0.033 & 0.482$\pm$0.063 & 0.711$\pm$0.018 & 0.637$\pm$0.023 & 0.799$\pm$0.021 & 0.746$\pm$0.024  & 0.276$\pm$0.07 & 0.325$\pm$0.017 \\
MCD-DFM Epistemic & 0.621$\pm$0.056 & 0.332$\pm$0.047 & 0.654$\pm$0.051 & 0.532$\pm$0.05 & 0.645$\pm$0.03 & 0.545$\pm$0.02  & 0.425$\pm$0.047 & 0.368$\pm$0.017 \\
\midrule
\textbf{nested-i.i.d.} & & & & & & & & \\
SWAG Aleatoric & 0.662$\pm$0.044 & 0.472$\pm$0.076 & 0.627$\pm$0.059 & 0.610$\pm$0.052 & 0.855$\pm$0.005 & 0.816$\pm$0.004  & 0.357$\pm$0.080 & 0.35$\pm$0.024 \\
SWAG Epistemic & 0.676$\pm$0.063 & 0.499$\pm$0.099 & 0.645$\pm$0.064 & 0.639$\pm$0.06 & 0.832$\pm$0.001 & 0.800$\pm$0.005  & \underline{0.573}$\pm$0.071 & \underline{0.444}$\pm$0.038 \\
LA Aleatoric & 0.666$\pm$0.033 & 0.474$\pm$0.051 & 0.636$\pm$0.048 & 0.623$\pm$0.041 & \underline{0.865}$\pm$0.003 & \underline{0.832}$\pm$0.001  & 0.324$\pm$0.056 & 0.338$\pm$0.016 \\
LA Epistemic & 0.644$\pm$0.023 & 0.422$\pm$0.043 & 0.657$\pm$0.058 & 0.648$\pm$0.04 & 0.829$\pm$0.004 & 0.787$\pm$0.002  & 0.503$\pm$0.043 & 0.411$\pm$0.023 \\
MCD Aleatoric & 0.741$\pm$0.004 & 0.501$\pm$0.011 & 0.72$\pm$0.009 & 0.645$\pm$0.015 & 0.856$\pm$0.01 & 0.822$\pm$0.012  & 0.438$\pm$0.005 & 0.376$\pm$0.002 \\
MCD Epistemic & 0.731$\pm$0.012 & 0.486$\pm$0.014 & \underline{0.729}$\pm$0.01 & 0.642$\pm$0.012 & 0.833$\pm$0.022 & 0.796$\pm$0.029  & 0.418$\pm$0.009 & 0.368$\pm$0.004 \\
\midrule
\textbf{nested-antithetic} & & & & & & & & \\
\modelname~Aleatoric & \underline{0.742}$\pm$0.018 & \underline{0.523}$\pm$0.06 & \underline{0.729}$\pm$0.03 & \underline{0.663}$\pm$0.04 & \bf{0.869}$\pm$0.007 & \bf{0.835}$\pm$0.006 & 0.309$\pm$0.072 & 0.335$\pm$0.02 \\
\modelname~Epistemic & \bf{0.809}$\pm$0.019 & \bf{0.698}$\pm$0.059 & \bf{0.785}$\pm$0.026 & \bf{0.768}$\pm$0.033 & 0.794$\pm$0.006 & 0.726$\pm$0.01 &  \bf{0.89}$\pm$0.005 & \bf{0.799}$\pm$0.014 \\
\bottomrule
\end{tabular}
}
\label{tab:unreliable_generation_detection}
\end{table*}

Following CellFlux~\citep{zhang2025cellflux}, we adopt a U-Net-based architecture to parameterize
the velocity field. Perturbation conditions are encoded using
IMPA~\citep{palma2025predicting}. We train models for 100 epochs on BBBC021,
and 200 epochs on JUMP and fMRI using two NVIDIA H100 GPUs.
For generation quality, we evaluate using 5000 generated images per unseen scenario
on BBBC021 and JUMP, and 1024 images on fMRI.
For unreliable generation detection, we evaluate on randomly selected 500 ID and 500 OOD images per scenario
(except 250 OOD images for Unseen Pert due to data availability).
For MAP and MCD-DFM, we generate $4$ samples per image.
For \modelname~, MCD, LA, and SWAG, we 
generate $4\times 4$ samples per image via nested sampling, where the $4$ SDE samples for \modelname~come from $2$ antithetic pairs.

\subsection{Main Results}
\label{sec:results}
\paragraph{Stochastic Flow Matching Improves Generalization.} 
As shown in Table~\ref{tab:ood_generalization}, SFM demonstrates significant and consistent improvements in generalization
across all scenarios except Unseen Plates, where the distribution shift is the smallest and all baselines perform well. 
This overall improved robustness stems from explicitly modeling aleatoric uncertainty through
stochastic perturbations during training and SDE-based sampling at inference.
By capturing the inherent variability in the conditional distribution
$p(\vx_1 \mid \vx_0, c, \theta, \phi)$\footnote{In principle, integrating over posterior uncertainty in $(\theta,\phi)$ could further improve generalization; however, averaging generated images $\vx_1$ pixel-wise reduces image quality.}
rather than learning a deterministic mapping, the model avoids overfitting to
spurious correlations specific to the training distribution. The resulting
uncertainty-aware generation process produces diverse, plausible outputs that
better generalize to unseen distribution shifts. 
Examples of the generated images together with
the corresponding source and ground truth target images are shown in 
Fig.~\ref{fig:ood_examples}.

\paragraph{Unreliable Generation Detection Performance Analysis.}
As shown in Table~\ref{tab:unreliable_generation_detection}, uncertainty-based
scores provide informative signals for unreliable generation detection across
cellular imaging shifts. Among non-nested baselines, MAP provides a strong
lower-cost aleatoric baseline in several cases; however, it significantly
underperforms \modelname~in all scenarios. MCD-DFM is generally weaker than
nested-i.i.d. BSFM baselines, indicating that the SFM stochastic backbone is
important for UQ. \modelname~provides the most consistent uncertainty signals
for unreliable generation detection, demonstrating the importance of combining
SFM with antithetic sampling. In some scenarios, epistemic uncertainty provides
the stronger signal, whereas in other cases aleatoric uncertainty is critical,
indicating that both signals are needed for robust unreliable generation
detection.
%
%
We omit JUMP Unseen Plates because, as shown in
Table~\ref{tab:ood_generalization}, SFM produces reliable generations for
nearly all samples under this mild distribution shift.
\begin{wrapfigure}{r}{0.505\linewidth}
    \vspace{-0mm}
    \centering
    \includegraphics[width=\linewidth]{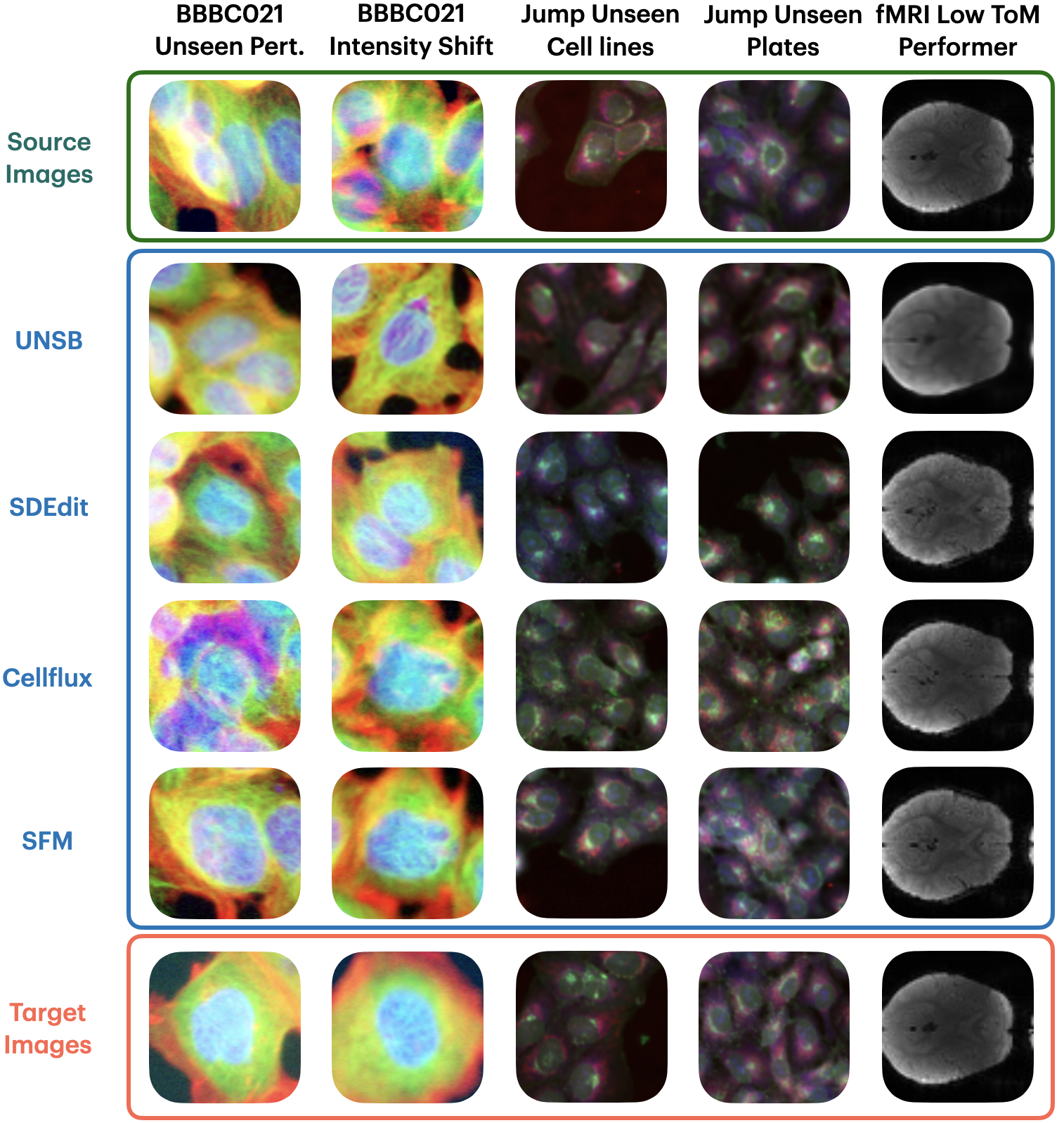}
    \vspace{-4mm}
    \caption{Examples of generated images from different methods under various unseen scenarios, compared with source and ground-truth target images.}
    \vspace{-8mm}
    \label{fig:ood_examples}
\end{wrapfigure}

\paragraph{Ablation Study.}
We further ablate two practical factors in unreliable generation detection:
sampling budget and the construction of unreliable-generation labels. A key
challenge for nested UQ is that finite-sample aleatoric variability can
contaminate epistemic estimates, making the two signals difficult to use
separately. Antithetic sampling reduces this residual Monte Carlo error in
\modelname, which helps separate the two uncertainty components under small
sampling budgets. Table~\ref{tab:sample_size_sensitivity} shows that
the main trends are already preserved with small budgets such as $2\times4$ or
$4\times2$, supporting the sample-efficiency of the estimator. We also test
whether the results depend on the scenario-filtering threshold used to define
unreliable generations. Tables~\ref{tab:ood_detection_meanthreshold}
and~\ref{tab:ood_detection_nothreshold} show that the relative trends are
broadly stable when using a mean threshold or no thresholding, respectively,
indicating that the conclusions are not driven by a single label-construction
choice.

\section{Conclusion and Limitations}

We presented a framework for more trustworthy distribution-to-distribution
generative modeling in scientific imaging. SFM serves as the stochastic
flow-matching backbone for improved generalization, BSFM builds on SFM to
enable estimation of aleatoric and epistemic uncertainty, and \modelname~makes
the required nested sampling feasible through antithetic variance reduction;
otherwise, nested SDE sampling would be computationally prohibitive for
downstream unreliable generation detection. Experiments on cellular imaging and
fMRI show improved generation under unseen scenarios and effective
uncertainty-based signals for unreliable generation detection under practical
sampling budgets. Limitations include the use of approximate posterior
samplers, numerical SDE solvers, trace-based covariance scores, and a score
convention tailored to the collapse behavior observed here; future work can
extend each component to broader shift regimes. Overall, these results
demonstrate the potential impact of the proposed methodology in real-world
distribution-to-distribution scientific imaging applications, ranging from
virtual cell modeling to clinical imaging analyses.

\section*{Acknowledgement}
This work was supported in part by ONR Grant N00014-22-1-2110, NSF Grant 2205084, and the Stanford Institute for Human-Centered Artificial Intelligence (HAI). EBF is a Biohub, San Francisco, Investigator. S.Y. is a Chan Zuckerberg Biohub — San Francisco Investigator.

\bibliography{main}
\bibliographystyle{plain}

\newpage
\appendix
\section{MAP Approximation for Aleatoric Uncertainty}
\label{app:proof_aleatoric_map}

\begin{proposition}[MAP Approximation for Aleatoric Uncertainty]
\label{prop:aleatoric_map}
Let $V(\theta, \phi) :=
\mathrm{tr}\!\left(\mathrm{Var}(\vx_1 | \vx_0, c, \theta, \phi)\right)$ be
the scalar aleatoric uncertainty score used downstream. Let
$q(\theta,\phi)=\mathcal{N}((\hat{\theta},\hat{\phi}),\mH^{-1})$ be a Laplace
surrogate centered at the MAP estimate. If $V$ is $L$-Lipschitz continuous in
$(\theta,\phi)$ and $\|\mH^{-1}\|_{\mathrm{op}}\leq \epsilon^2$, then
\begin{align}
\left|\mathbb{E}_{q(\theta,\phi)}[V(\theta,\phi)] - V(\hat{\theta}, \hat{\phi})\right|
\leq L\epsilon\sqrt{d},
\end{align}
where $d = \dim(\theta, \phi)$.
\end{proposition}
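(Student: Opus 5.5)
The approach is a first-order Lipschitz argument followed by a Gaussian moment bound. Write $\psi=(\theta,\phi)\in\mathbb{R}^d$ and $\hat\psi=(\hat\theta,\hat\phi)$. Assume $V$ is integrable under $q$; this holds automatically because $V$ is Lipschitz and Gaussians have finite first moments. Then move the difference inside the expectation and apply Jensen's inequality for the absolute value:
\begin{align}
\left|\mathbb{E}_{q}[V(\psi)] - V(\hat\psi)\right| = \left|\mathbb{E}_{q}[V(\psi)-V(\hat\psi)]\right| \leq \mathbb{E}_{q}\left|V(\psi)-V(\hat\psi)\right| \leq L\,\mathbb{E}_{q}\|\psi-\hat\psi\|_2 .
\end{align}
The last step uses the $L$-Lipschitz assumption, taken with respect to the Euclidean norm on the joint parameter vector.

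It then remains to bound $\mathbb{E}_q\|\psi-\hat\psi\|_2$. Under $q$ we have $\psi-\hat\psi\sim\mathcal{N}(\vzero,\mH^{-1})$. Applying Jensen again, or Cauchy--Schwarz, gives
\begin{align}
\mathbb{E}_q\|\psi-\hat\psi\|_2 \leq \left(\mathbb{E}_q\|\psi-\hat\psi\|_2^2\right)^{1/2} = \sqrt{\mathrm{tr}(\mH^{-1})}.
\end{align}
Since $\mH^{-1}$ is symmetric positive semidefinite, its trace is the sum of its $d$ eigenvalues, each at most $\|\mH^{-1}\|_{\mathrm{op}}\leq\epsilon^2$. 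Hence $\mathrm{tr}(\mH^{-1})\leq d\epsilon^2$, so $\mathbb{E}_q\|\psi-\hat\psi\|_2\leq\epsilon\sqrt d$. Combining the two displays yields $L\epsilon\sqrt d$.

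No step is a real obstacle. The one point needing care is making the hypotheses precise. First, $\mH$ must be positive definite, so that $q$ is a genuine Gaussian and the eigenvalue bound on the trace applies. Second, the Lipschitz constant must be taken with respect to the same Euclidean norm used in the moment bound; if a different norm were intended, the $\sqrt d$ factor would change by norm-equivalence constants. A final remark could note that the bound is loose in high dimension: with a smoothness assumption such as a Lipschitz gradient, a second-order expansion would give an $O(\epsilon^2)$ error instead. That refinement is not needed for the stated claim.
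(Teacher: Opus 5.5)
Your proposal is correct and follows the paper's proof essentially step for step: Lipschitz continuity with the triangle/Jensen inequality to reduce to $L\,\mathbb{E}_q\|\psi-\hat\psi\|_2$, Jensen again to get $\sqrt{\mathrm{tr}(\mH^{-1})}$, and the eigenvalue bound $\mathrm{tr}(\mH^{-1})\le d\|\mH^{-1}\|_{\mathrm{op}}\le d\epsilon^2$. Your remarks on integrability, positive definiteness of $\mH$, and using the same Euclidean norm for the Lipschitz constant make explicit hypotheses that the paper leaves implicit.
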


\begin{remark}
The bound is a surrogate-local statement and can be loose in high dimensions.
It justifies the MAP plug-in as a computationally cheap approximation under
posterior concentration, but it does not imply that MAP is empirically optimal;
in our experiments, nested sampling can provide stronger unreliable generation
detection when the additional SDE solves are affordable.
\end{remark}

\begin{proof}
We prove that the MAP estimate provides a good approximation to the expected scalar aleatoric uncertainty score under the Laplace surrogate stated in Proposition~\ref{prop:aleatoric_map}.

\textbf{Setup.}
Let $V(\theta, \phi) := \mathrm{tr}\!\left(\mathrm{Var}(\vx_1 | \vx_0, c, \theta, \phi)\right)$ denote the scalar conditional uncertainty score as a function of parameters.
Let $q(\theta,\phi)=\mathcal{N}((\hat{\theta},\hat{\phi}),\mH^{-1})$, where $(\hat{\theta}, \hat{\phi})$ is the MAP estimate and $\mH$ is the Hessian of the negative log-posterior at the MAP.

\textbf{Step 1: Apply Lipschitz continuity.}
By the triangle inequality and Lipschitz continuity of $V$:
\begin{align}
&\left|\mathbb{E}_{q(\theta,\phi)}[V(\theta, \phi)] - V(\hat{\theta}, \hat{\phi})\right| \nonumber \\
&= \left|\mathbb{E}_{q(\theta,\phi)}[V(\theta, \phi) - V(\hat{\theta}, \hat{\phi})]\right| \nonumber \\
&\leq \mathbb{E}_{q(\theta,\phi)}[|V(\theta, \phi) - V(\hat{\theta}, \hat{\phi})|] \nonumber \\
&\leq L \cdot \mathbb{E}_{q(\theta,\phi)}[\|(\theta, \phi) - (\hat{\theta}, \hat{\phi})\|_2],
\end{align}
where the last inequality uses the $L$-Lipschitz property:
$|V(\theta, \phi) - V(\theta', \phi')| \leq L\|(\theta, \phi) - (\theta', \phi')\|_2$.

\textbf{Step 2: Bound the expected deviation.}
Let $\delta := (\theta, \phi) - (\hat{\theta}, \hat{\phi}) \sim \mathcal{N}(\vzero, \mH^{-1})$ denote the parameter deviation from the MAP.
By Jensen's inequality (since $\|\cdot\|_2$ is convex):
\begin{align}
\mathbb{E}[\|\delta\|_2]
&= \mathbb{E}\left[\sqrt{\sum_{i=1}^d \delta_i^2}\right]
\leq \sqrt{\mathbb{E}\left[\sum_{i=1}^d \delta_i^2\right]}
= \sqrt{\text{tr}(\mH^{-1})},
\end{align}
where $d = \dim(\theta, \phi)$ is the total parameter dimension.

\textbf{Step 3: Apply the operator norm bound.}
Using the relationship between trace and operator norm:
\begin{align}
\text{tr}(\mH^{-1})
= \sum_{i=1}^d \lambda_i(\mH^{-1})
\leq d \cdot \max_i \lambda_i(\mH^{-1})
= d \cdot \|\mH^{-1}\|_{\text{op}},
\end{align}
where $\lambda_i(\mH^{-1})$ are the eigenvalues of $\mH^{-1}$ and $\|\mH^{-1}\|_{\text{op}}$ is the operator norm.

\textbf{Step 4: Combine the bounds.}
By assumption, $\|\mH^{-1}\|_{\text{op}} \leq \epsilon^2$. Therefore:
\begin{align}
\mathbb{E}[\|\delta\|_2]
\leq \sqrt{\text{tr}(\mH^{-1})}
\leq \sqrt{d \cdot \|\mH^{-1}\|_{\text{op}}}
\leq \sqrt{d \cdot \epsilon^2}
= \epsilon\sqrt{d}.
\end{align}

\textbf{Step 5: Final bound.}
Combining Steps 1 and 4:
\begin{align}
\left|\mathbb{E}_{q(\theta,\phi)}[V(\theta, \phi)] - V(\hat{\theta}, \hat{\phi})\right|
\leq L \cdot \mathbb{E}[\|\delta\|_2]
\leq L\epsilon\sqrt{d}.
\end{align}

\end{proof}

\section{Dataset and OOD Scenario Details}
\label{app:dataset}
\paragraph{Cell Imaging Datasets.}
\textbf{BBBC021.} The BBBC021v1 dataset \citep{caie2010high}
from the Broad Bioimage Benchmark Collection
is a benchmark for
image-based phenotypic profiling of chemical
perturbations in MCF-7 breast cancer cells.
It contains 97,504 fluorescent microscopy images
captured from cells treated with 113 small molecules
at eight concentrations, targeting diverse cellular
mechanisms including actin disruption, Aurora kinase
inhibition, and microtubule stabilization. Each
image provides multi-channel fluorescence for DNA,
F-actin, and beta-tubulin, enabling detailed
morphological analysis.

\begin{table}[t]
\centering
\small
\caption{Summary of datasets and OOD scenarios.}
\begin{tabular}{l|llllll}
\toprule
Dataset & BBBC021 & BBBC021 & JUMP & JUMP & JUMP & fMRI\\
OOD Case & Unseen Pert. & Intensity Shift & Unseen Cell Lines & Unseen Plates & Intensity Shift & Low ToM\\
\midrule
\# Channels & 3 & 3 & 5 & 5 & 5 & 1\\
\# Images & 98K & 98K & 72K & 72K & 72K & 12K \\
Dist Shift & Extreme & High & Medium & Low & High & -\\
\bottomrule
\end{tabular}
\label{tab:dataset_summary}
\end{table}

\textbf{JUMP.}
The JUMP dataset \citep{chandrasekaran2023jump} is the most
comprehensive image-based profiling resource to date, integrating
both genetic and chemical perturbations. It comprises approximately
3 million images capturing phenotypic responses of 75 million
single cells to genetic knockouts (CRISPR/ORF) and chemical
treatments. The dataset includes two cell types: U2OS and A549.
We use A549 cells for model training and U2OS cells for OOD
evaluation. For our experiments, we use a subset of 72,000
images and focus exclusively on chemical perturbations.

\paragraph{fMRI Dataset.}
We use the ds000228-1.1.1 dataset \citep{richardson2018development}
containing MRI data from 3--12-year-old children viewing a Pixar animated film. The task is to
transform brain states from resting baseline (TRs 1--10, 20 seconds) to Theory of Mind (ToM) activation
(7 events totaling 25 TRs, 50 seconds). We focus on slices 13--16 (4 middle brain slices capturing
ToM-responsive regions: temporoparietal junction and medial prefrontal cortex), with each slice as a 
64$\times$64 grayscale image. The dataset includes 122 children categorized by False Belief performance: PASS
group (84 subjects, mean ToM score: $0.883\pm0.102$, age: $7.6\pm2.2$ years) as in-domain, and
INC+FAIL groups (38 subjects, mean ToM score: $0.537\pm0.148$, age: $4.7\pm1.0$ years) as out-of-distribution.
Each subject contributes 40 rest images and 100 ToM images. The training set contains 9,712
images from PASS subjects; the test ID set contains 2,048 randomly selected PASS images; and the test OOD
set contains 2,048 randomly selected INC+FAIL images, all normalized per slice.

\section{Implementation Details}
\label{app:implementation}
\paragraph{Filtering Strategy for Unreliable Generation Detection.}
To ensure the reliability of our unreliable generation detection benchmarks, we filter out ambiguous
samples that may not represent true in-distribution (ID) or out-of-distribution
(OOD) behavior. For BBBC021, we compute the feature-space distance between model
predictions and randomly paired ground-truth treatment images using a
pretrained mode-of-action classifier. For JUMP, we use the Structural
Similarity Index Measure (SSIM) as the distance metric. We apply thresholds
based on the distances calculated across the dataset. Specifically, we remove ID
samples with distances greater than $\mu - 0.5\sigma$ (potentially poor generations)
and remove OOD samples with distances smaller than $\mu + 0.5\sigma$ (samples that
appear misleadingly in-distribution), where $\mu$ and $\sigma$ denote the mean and
standard deviation of the distance metric, respectively.

\section{Unreliable Generation Detection Sensitivity Analyses}
\label{app:ood_sensitivity}

Table~\ref{tab:sample_size_sensitivity} reports sensitivity to the number of posterior samples and SDE samples used by \modelname. The results show that even small sampling budgets, such as $2\times4$ or $4\times2$, already preserve the main unreliable generation detection trends, supporting the sample-efficiency role of antithetic sampling.

\begin{table*}[t]
\centering
\small
\resizebox{\textwidth}{!}{
\begin{tabular}{lcccccccccc}
\toprule
& \multicolumn{4}{c}{\textbf{BBBC021}} & \multicolumn{6}{c}{\textbf{JUMP}} \\
\cmidrule(lr){2-5} \cmidrule(lr){6-11}
\textbf{Method} & \multicolumn{2}{c}{Unseen Pert.} & \multicolumn{2}{c}{Intensity Shift} & \multicolumn{2}{c}{Unseen Cell Lines} & \multicolumn{2}{c}{Unseen Plates} & \multicolumn{2}{c}{Intensity Shift}\\
\cmidrule(lr){2-3} \cmidrule(lr){4-5} \cmidrule(lr){6-7} \cmidrule(lr){8-9} \cmidrule(lr){10-11}
& AUROC$\uparrow$ & AUPR$\uparrow$ & AUROC$\uparrow$ & AUPR$\uparrow$ & AUROC$\uparrow$ & AUPR$\uparrow$ & AUROC$\uparrow$ & AUPR$\uparrow$ & AUROC$\uparrow$ & AUPR$\uparrow$\\
\midrule
2x2~Aleatoric & 0.673 & 0.404 & 0.679 & 0.572 & \textbf{0.838} & \textbf{0.792} & \textbf{0.735} & \textbf{0.662} & 0.303 & 0.331\\
2x2~Epistemic & \textbf{0.752} & \textbf{0.551} & \textbf{0.785} & \textbf{0.715} & 0.743 & 0.678 & 0.628 & 0.595 & \textbf{0.826} & \textbf{0.710}\\
\midrule
2x4~Aleatoric & 0.718 & 0.469 & 0.715 & 0.647 & \textbf{0.852} & \textbf{0.818} & \textbf{0.747} & \textbf{0.673} & 0.363 & 0.349\\
2x4~Epistemic & \textbf{0.725} & \textbf{0.597} & \textbf{0.784} & \textbf{0.763} & 0.759 & 0.709 & 0.614 & 0.542 & \textbf{0.846} & \textbf{0.737}\\
\midrule
4x2~Aleatoric & 0.694 & 0.426 & 0.688 & 0.598 & \textbf{0.856} & \textbf{0.813} & \textbf{0.743} & \textbf{0.669} & 0.278 & 0.323\\
4x2~Epistemic & \textbf{0.773} & \textbf{0.621} & \textbf{0.784} & \textbf{0.736} & 0.779 & 0.709 & 0.671 & 0.594 & \textbf{0.891} & \textbf{0.801}\\
\midrule
4x4~Aleatoric & 0.730 & 0.486 & 0.720 & 0.649 & \textbf{0.860} & \textbf{0.829} & \textbf{0.739} & \textbf{0.666} & 0.364 & 0.350\\
4x4~Epistemic & \textbf{0.788} & \textbf{0.648} & \textbf{0.795} & \textbf{0.766} & 0.800 & 0.738 & 0.667 & 0.578 & \textbf{0.897} & \textbf{0.816}\\
\midrule
8x4~Aleatoric & 0.730 & 0.500 & 0.709 & 0.643 & \textbf{0.870} & \textbf{0.843} & \textbf{0.744} & \textbf{0.676} & 0.357 & 0.347\\
8x4~Epistemic & \textbf{0.807} & \textbf{0.672} & \textbf{0.770} & \textbf{0.750} & 0.803 & 0.745 & 0.679 & 0.599 & \textbf{0.906} & \textbf{0.831}\\
\bottomrule
\end{tabular}
}
\caption{Sample-size sensitivity for \modelname. Each row reports AUROC/AUPR for a posterior-sample $\times$ SDE-sample budget.}
\label{tab:sample_size_sensitivity}
\end{table*}

Tables~\ref{tab:ood_detection_nothreshold} and~\ref{tab:ood_detection_meanthreshold} report unreliable generation detection results under alternative unreliable-generation label construction choices. The no-threshold setting uses scenario provenance directly, while the mean-threshold setting uses $\mu$ rather than $\mu+0.5\sigma$ for high-error OOD filtering. The relative trends are broadly preserved across these alternatives, suggesting that the headline results are not driven by a single task-specific threshold.

\begin{table*}[t]
\centering
\small
\resizebox{\textwidth}{!}{
\begin{tabular}{lcccccccccc}
\toprule
& \multicolumn{4}{c}{\textbf{BBBC021}} & \multicolumn{6}{c}{\textbf{JUMP}} \\
\cmidrule(lr){2-5} \cmidrule(lr){6-11}
\textbf{Method} & \multicolumn{2}{c}{Unseen Pert.} & \multicolumn{2}{c}{Intensity Shift} & \multicolumn{2}{c}{Unseen Cell Lines} & \multicolumn{2}{c}{Unseen Plates} & \multicolumn{2}{c}{Intensity Shift}\\
\cmidrule(lr){2-3} \cmidrule(lr){4-5} \cmidrule(lr){6-7} \cmidrule(lr){8-9} \cmidrule(lr){10-11}
& AUROC$\uparrow$ & AUPR$\uparrow$ & AUROC$\uparrow$ & AUPR$\uparrow$ & AUROC$\uparrow$ & AUPR$\uparrow$ & AUROC$\uparrow$ & AUPR$\uparrow$ & AUROC$\uparrow$ & AUPR$\uparrow$\\
\midrule
MAP Aleatoric & 0.616 & 0.443 & 0.636 & 0.626 & 0.670 & 0.627 & 0.517 & 0.519 & 0.227 & 0.351 \\
SWAG Aleatoric & 0.684 & 0.589 & 0.680 & 0.729 & 0.699 & 0.647 & 0.522 & 0.520 & 0.813 & 0.712 \\
SWAG Epistemic & 0.661 & 0.572 & 0.680 & 0.730 & 0.689 & 0.638 & 0.516 & 0.518 & 0.806 & 0.702 \\
LA Aleatoric & 0.662 & 0.574 & 0.671 & 0.727 & 0.684 & 0.646 & 0.527 & 0.528 & 0.804 & 0.707 \\
LA Epistemic & 0.635 & 0.495 & 0.669 & 0.713 & 0.646 & 0.615 & \textbf{0.540} & \textbf{0.543} & 0.512 & 0.466 \\
\midrule
MCD Aleatoric & 0.687 & 0.516 & 0.691 & 0.681 & 0.716 & 0.674 & 0.522 & 0.522 & 0.300 & 0.372 \\
MCD Epistemic & 0.693 & 0.507 & 0.692 & 0.673 & 0.686 & 0.649 & 0.521 & 0.519 & 0.299 & 0.372 \\
\modelname~Aleatoric & 0.672 & 0.502 & 0.690 & 0.679 & \textbf{0.718} & \textbf{0.676} & 0.524 & 0.523 & 0.224 & 0.350 \\
\modelname~Epistemic & \textbf{0.764} & \textbf{0.695} & \textbf{0.752} & \textbf{0.784} & 0.695 & 0.647 & 0.535 & 0.532 & \textbf{0.829} & \textbf{0.721} \\
\bottomrule
\end{tabular}
}
\caption{Unreliable generation detection results without thresholding.}
\label{tab:ood_detection_nothreshold}
\end{table*}

\begin{table*}[t]
\centering
\small
\resizebox{\textwidth}{!}{
\begin{tabular}{lcccccccccc}
\toprule
& \multicolumn{4}{c}{\textbf{BBBC021}} & \multicolumn{6}{c}{\textbf{JUMP}} \\
\cmidrule(lr){2-5} \cmidrule(lr){6-11}
\textbf{Method} & \multicolumn{2}{c}{Unseen Pert.} & \multicolumn{2}{c}{Intensity Shift} & \multicolumn{2}{c}{Unseen Cell Lines} & \multicolumn{2}{c}{Unseen Plates} & \multicolumn{2}{c}{Intensity Shift}\\
\cmidrule(lr){2-3} \cmidrule(lr){4-5} \cmidrule(lr){6-7} \cmidrule(lr){8-9} \cmidrule(lr){10-11}
& AUROC$\uparrow$ & AUPR$\uparrow$ & AUROC$\uparrow$ & AUPR$\uparrow$ & AUROC$\uparrow$ & AUPR$\uparrow$ & AUROC$\uparrow$ & AUPR$\uparrow$ & AUROC$\uparrow$ & AUPR$\uparrow$\\
\midrule
MAP Aleatoric & 0.639 & 0.375 & 0.667 & 0.547 & 0.767 & 0.698 & 0.638 & 0.574 & 0.286 & 0.319\\
SWAG Aleatoric & 0.701 & 0.550 & 0.670 & 0.683 & 0.794 & 0.728 & 0.615 & 0.562 & 0.813 & 0.712\\
SWAG Epistemic & 0.676 & 0.523 & 0.678 & 0.684 & 0.782 & 0.714 & 0.627 & 0.574 & 0.879 & 0.797\\
LA Aleatoric & 0.674 & 0.534 & 0.654 & 0.678 & 0.779 & 0.721 & 0.630 & 0.583 & 0.875 & 0.793\\
LA Epistemic & 0.640 & 0.435 & 0.645 & 0.649 & 0.739 & 0.682 & 0.628 & 0.600 & 0.564 & 0.454\\
MCD Aleatoric & 0.723 & 0.461 & 0.700 & 0.591 & 0.829 & 0.780 & 0.651 & 0.599 & 0.381 & 0.349\\
MCD Epistemic & 0.727 & 0.453 & 0.706 & 0.595 & 0.777 & 0.728 & 0.634 & 0.579 & 0.371 & 0.346\\
\midrule
\modelname~Aleatoric & 0.710 & 0.467 & 0.699 & 0.602 & 0.841 & 0.796 & 0.666 & 0.614 & 0.302 & 0.324\\
\modelname~Epistemic & 0.795 & 0.633 & 0.764 & 0.716 & 0.784 & 0.710 & 0.614 & 0.557 & 0.893 & 0.802\\
\bottomrule
\end{tabular}
}
\caption{Unreliable generation detection results with mean threshold $\mu$.}
\label{tab:ood_detection_meanthreshold}
\end{table*}

\section{Responsible Use Safeguards}
\label{app:responsible_use}

The proposed methods are intended to support scientific imaging analysis by
improving generalization and flagging potentially unreliable generations. The
generated images and uncertainty scores should be used for hypothesis
prioritization and model diagnostics, not as substitutes for biological or
clinical validation. Any released code or model artifacts should be accompanied
by documentation describing the intended research use, dataset provenance,
evaluation settings, and the need for independent validation before drawing
therapeutic, biological, or clinical conclusions.

\section{Additional Experimental Results}

\subsection{Visualizing SFM Generalization}

Figures~\ref{fig:ood_intensity_uperb}, \ref{fig:ood_intensity},
\ref{fig:ood_cellline}, and~\ref{fig:ood_plate}
show additional examples of generated images from different methods on
BBBC021 and JUMP under various OOD scenarios,
compared with the source images and ground-truth target images.

\begin{figure*}[t!]
    \centering
    \includegraphics[width=0.6\linewidth]{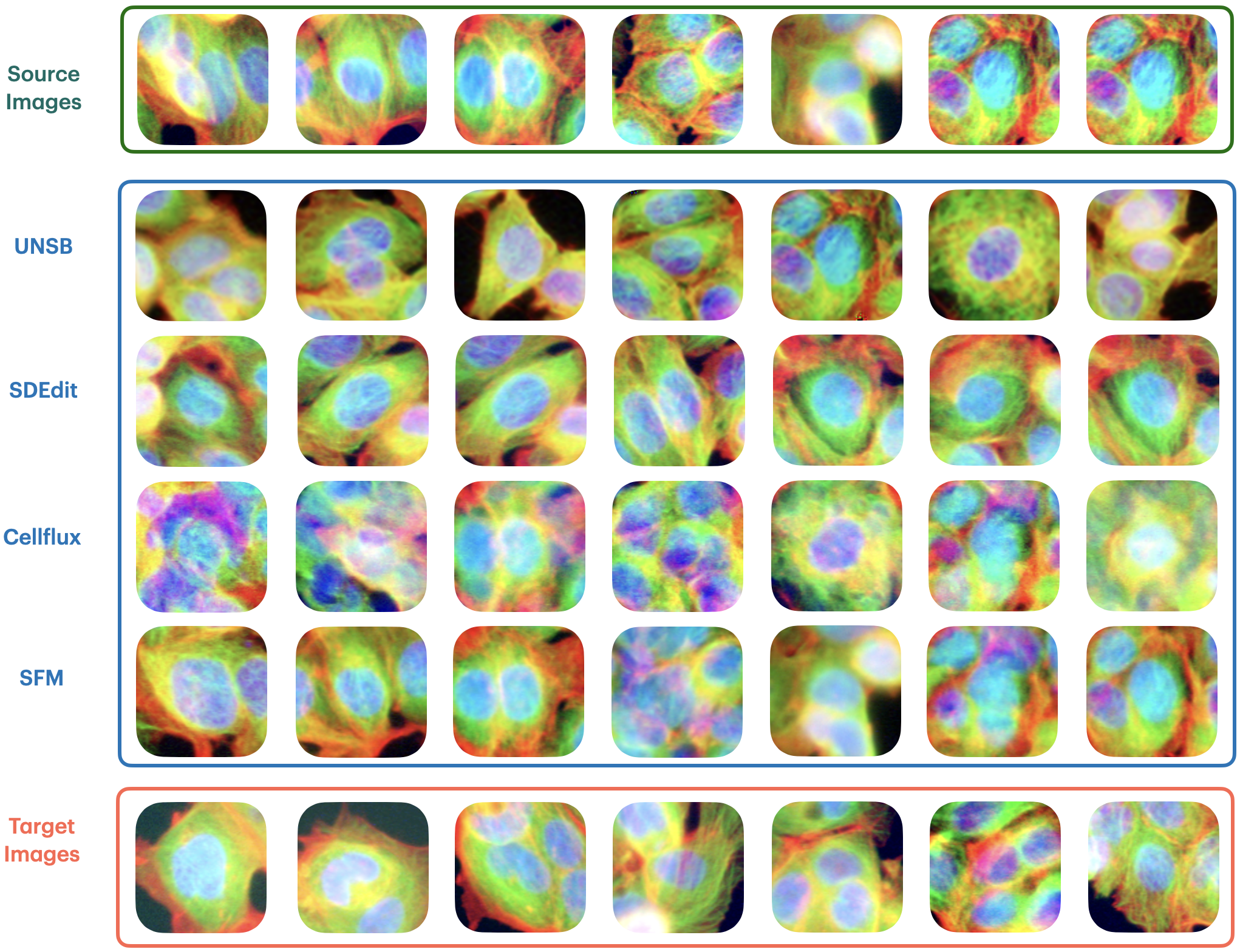}
    \caption{Examples of generated images from different methods on
    BBBC021 under the Unseen Pert. OOD scenario,
    compared with source images and ground-truth target images.}
    \label{fig:ood_intensity_uperb}
\end{figure*}

\begin{figure*}[t!]
    \centering
    \includegraphics[width=0.6\linewidth]{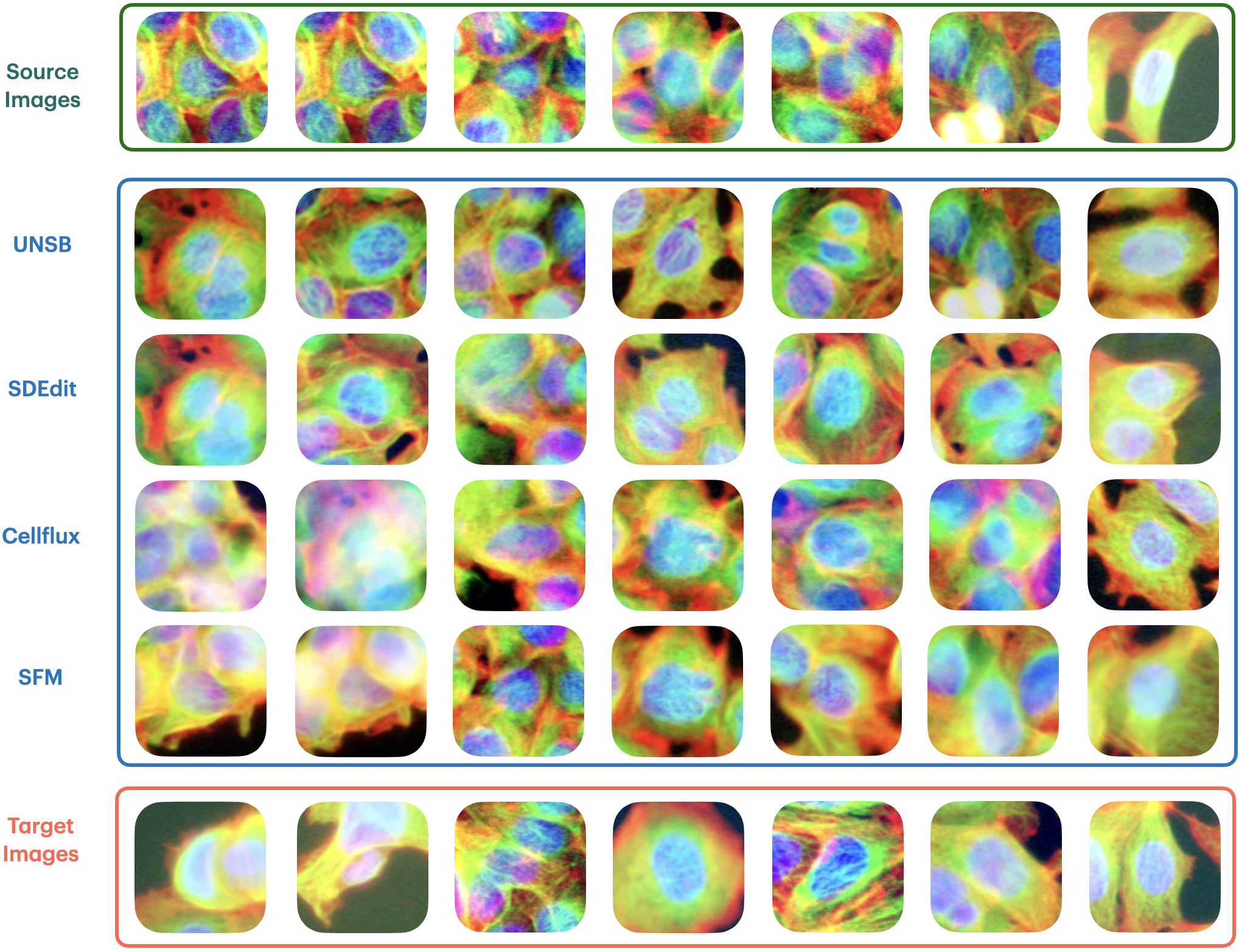}
    \caption{Examples of generated images from different methods on
    BBBC021 under the Intensity Shift OOD scenario,
    compared with source images and ground-truth target images.}
    \label{fig:ood_intensity}
\end{figure*}

\begin{figure*}[t!]
    \centering
    \includegraphics[width=0.6\linewidth]{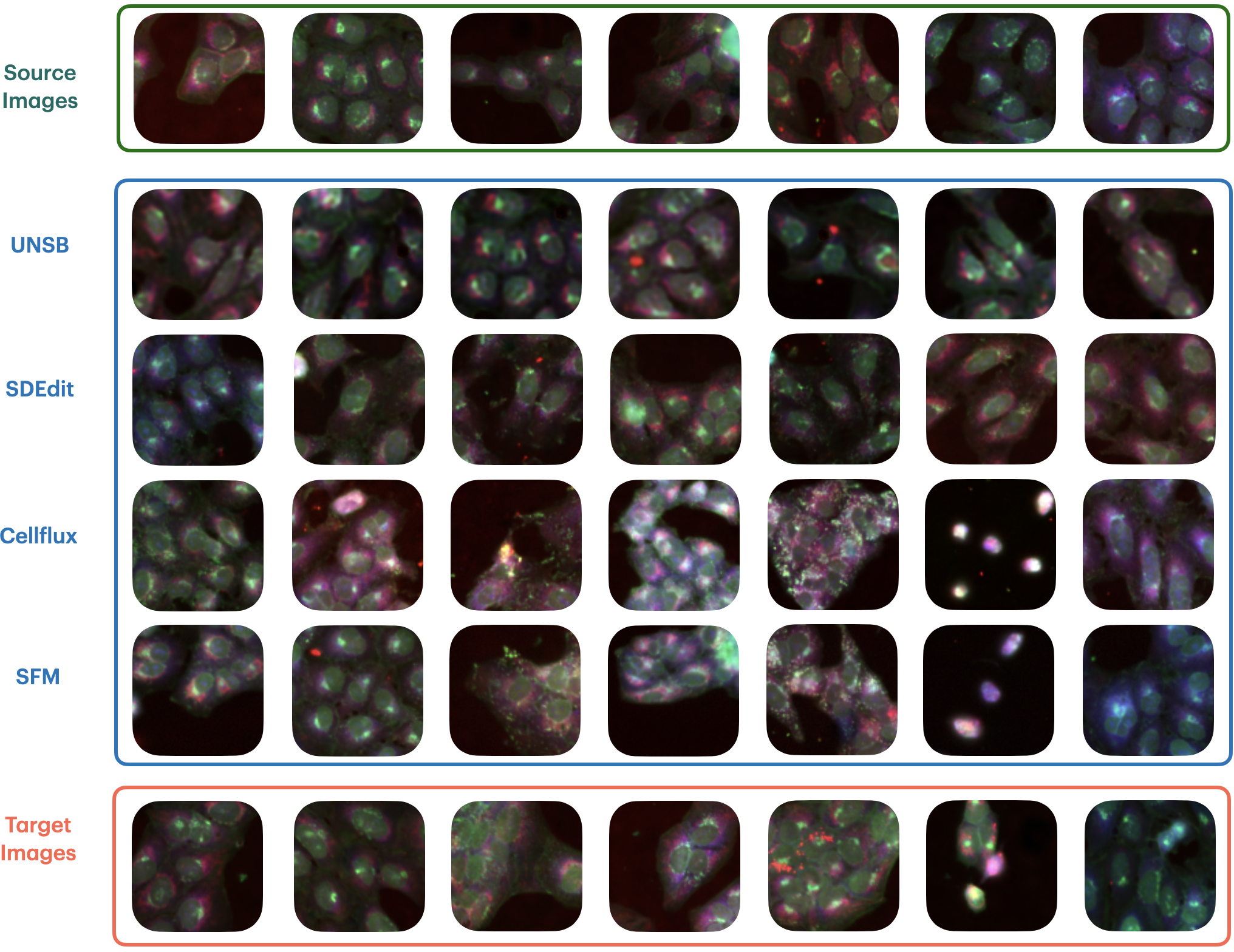}
    \caption{Examples of generated images from different methods on
    JUMP under the Unseen Cell Lines OOD scenario,
    compared with source images and ground-truth target images.}
    \label{fig:ood_cellline}
\end{figure*}

\begin{figure*}[t!]
    \centering
    \includegraphics[width=0.6\linewidth]{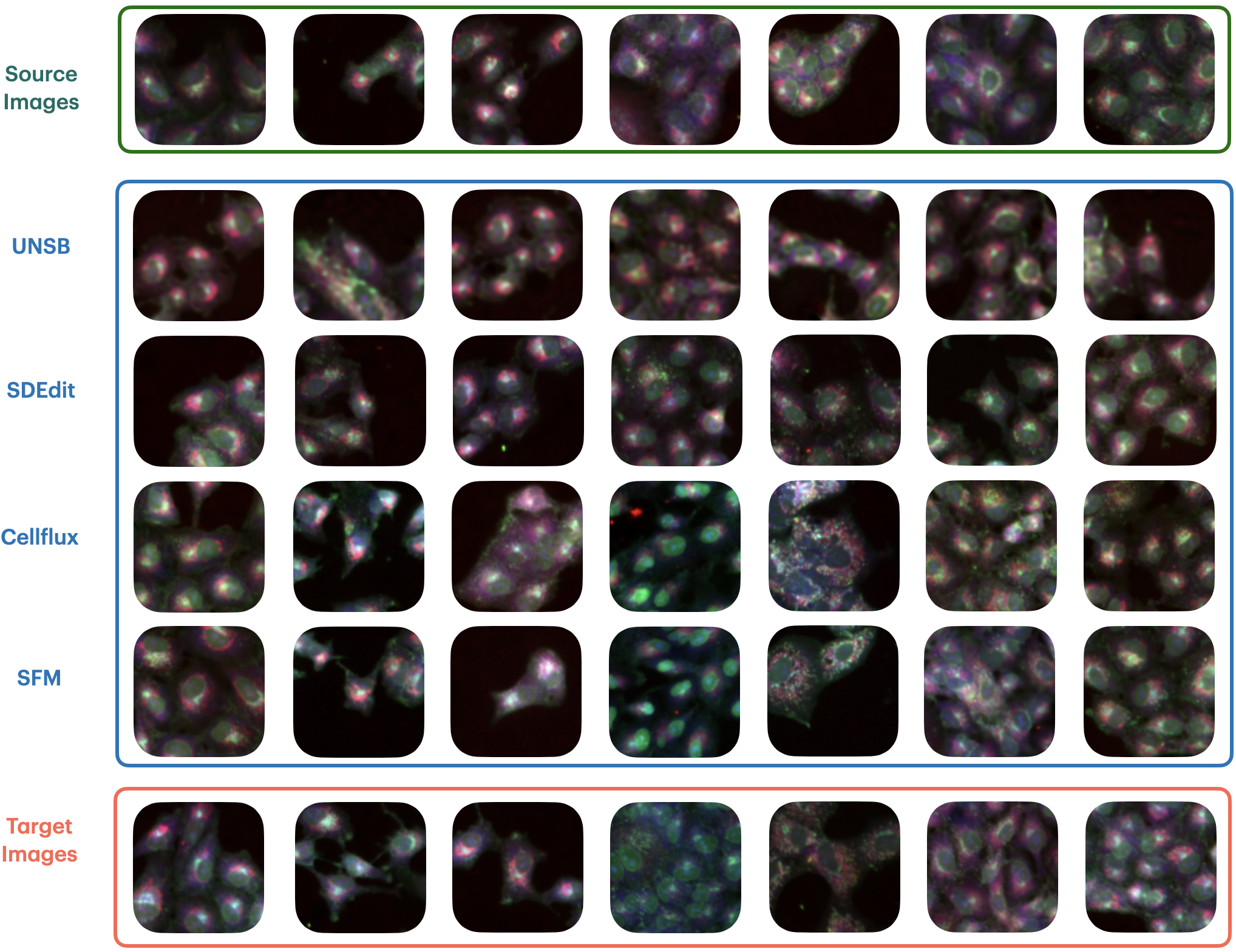}
    \caption{Examples of generated images from different methods on
    JUMP under the Unseen Plates OOD scenario,
    compared with source images and ground-truth target images.}
    \label{fig:ood_plate}
\end{figure*}

\end{document}